\def\BibTeX{{\rm B\kern-.05em{\sc i\kern-.025em b}\kern-.08em
    T\kern-.1667em\lower.7ex\hbox{E}\kern-.125emX}}
    \newcommand{\C}{\mathbb{C}}
    \newcommand{\N}{\mathbb{N}}
    \newcommand{\R}{\mathbb{R}}
    \newcommand{\Bcal}{\mathcal{B}}
    \newcommand{\Ccal}{\mathcal{C}}
    \newcommand{\Fcal}{\mathcal{F}}
    \newcommand{\Hcal}{\mathcal{H}}
    \newcommand{\Lcal}{\mathcal{L}}
    \newcommand{\Mcal}{\mathcal{M}}
    \newcommand{\Ncal}{\mathcal{N}}
    \newcommand{\Pcal}{\mathcal{P}}
    \newcommand{\Rcal}{\mathcal{R}}
    \newcommand{\Mfrak}{\mathfrak{M}}
    \newcommand{\layer}{{(\ell)}}
    \newcommand{\player}{{(\ell-1)}}
    \newcommand{\dmul}{\:\mathrm{d}\mu^\layer}
    \newcommand{\dmuG}{\:\mathrm{d}\mu_{G}}
    \newcommand{\muGhat}[1]{\ensuremath{\mu_{\widehat{G}}\left(\text{}#1\text{}\right)}}
    \newcommand{\dmuGhat}{\:\mathrm{d}\mu_{\widehat{G}}}
    \newcommand{\norm}[1]{\ensuremath{\left\lVert\text{}#1\text{}\right\rVert}}
    \newcommand{\ip}[1]{\ensuremath{\left<\text{} \, #1 \, \text{}\right>}}
    \newcommand{\set}[1]{\ensuremath{\left\{\text{}#1\text{}\right\}}}
    \newtheorem{assumption}{Assumption}[section]
    \newtheorem{remark}[assumption]{Remark}
    \newtheorem{proposition}[assumption]{Proposition}
    \newtheorem{theorem}[assumption]{Theorem}
    \newtheorem{corollary}[assumption]{Corollary}
    \DeclareMathOperator{\id}{id}
    \DeclareMathOperator{\supp}{supp}
    \newcommand\copyrighttext{%
    	\footnotesize \textcopyright 2025 IEEE. Personal use of this material is permitted. Permission from IEEE must be obtained for all other uses, in any current or future media, including reprinting/republishing this material for advertising or promotional purposes, creating new collective works, for resale or redistribution to servers or lists, or reuse of any copyrighted component of this work in other works.
    }
    \newcommand\copyrightnotice{%
    	\begin{tikzpicture}[remember picture,overlay]
    		\node[anchor=north,yshift=-10pt] at (current page.north) {\fbox{\parbox{\dimexpr\textwidth-\fboxsep-\fboxrule\relax}{\copyrighttext}}};
    	\end{tikzpicture}%
    }
    \newcommand\citationtext{%
    	\footnotesize M. Getter, ``Digital implementations of deep feature extractors are intrinsically informative'', 2025 International Conference on Sampling Theory and Applications (SampTA), 2025, pp. 1-6, 
    	DOI: \href{https://doi.org/10.1109/SampTA64769.2025.11133552}{10.1109/SampTA64769.2025.11133552}.
    }
    \newcommand\citationnotice{%
    	\begin{tikzpicture}[remember picture,overlay]
    		\node[anchor=south,yshift=10pt] at (current page.south) {\fbox{\parbox{\dimexpr\textwidth-\fboxsep-\fboxrule\relax}{\citationtext}}};
    	\end{tikzpicture}%
    }
\begin{document}

\title{Digital implementations of deep feature extractors are intrinsically informative
\thanks{The author acknowledges funding by the Deutsche Forschungsgemeinschaft (German Research Foundation)---Project number 442047500 through the Collaborative Research Center ”Sparsity and Singular Structures” (SFB 1481).
}
}

\author{\IEEEauthorblockN{Max Getter}
\IEEEauthorblockA{\textit{Chair for Geometry and Analysis} \\
\textit{RWTH Aachen University}\\
getter@mathga.rwth-aachen.de}%
}

\maketitle
\copyrightnotice
\citationnotice
\vspace*{-1em}

\begin{abstract}
  Rapid information (energy) propagation in deep feature extractors is crucial to balance computational complexity versus expressiveness as a representation of the input. We prove an upper bound for the speed of energy propagation in a unified framework that covers different neural network models, both over Euclidean and non-Euclidean domains. Additional structural information about the signal domain can be used to explicitly determine or improve the rate of decay. To illustrate this, we show global exponential energy decay for a range of 1) feature extractors with discrete-domain input signals, and 2) convolutional neural networks (CNNs) via scattering over locally compact abelian (LCA) groups.
\end{abstract}

\begin{IEEEkeywords}
    Deep learning, representation learning, feature learning, scattering transform, %
    information retention, energy propagation, neural networks. %
\end{IEEEkeywords}

\section{Introduction}\label{sec: Introduction}

Deep feature extractors (feature maps) are supposed to produce meaningful representations of the input data for a machine learning task. %
It is generally hard to predict if a specific feature is relevant to the task at hand \cite{bengio2013representation}. To assess the power of a feature extractor, it is hence necessary to rely on more generic (i.e., task-unspecific) criteria. These include the ability of the feature extractor to encode abstract concepts---resulting in (approximate) invariance to local changes of the input---and its ability to disentangle the factors that are responsible for variation within the data \cite{bengio2013representation,hinton2006reducing, mallat2016understanding,ranzato2007unsupervised}. Both criteria are related by the key component of information retention, which leaves feature engineers with the challenging task of balancing the discriminative power of the feature extractor (i.e., the ability to disentangle the driving factors for variance within the data) and the potential loss of information (as a result of transforming the input). In this contribution, we focus on the premise that deep feature extractors should contain most of the (relevant) information about their input signals, which is expressed by the aggregate energy content of the features.  %

\subsection{Prior work}\label{sec: Prior work}

Information retention is a key %
component of successful feature extraction. It has hence already been addressed by prior works---mainly in the context of scattering CNNs, both over Euclidean domains \cite{mallat2012group,waldspurger2017exponential,wiatowski2017energy,wiatowski2017topology,czaja2019analysis,fuhr2025energy} and graphs \cite{zou2020graph,perlmutter2023understanding}. While the question of energy decay seems to be reasonably settled for scattering CNNs over Euclidean domains, the literature is rather sparse on energy decay in more general feature extractors, in particular, scattering CNNs over non-Euclidean domains. %

\subsection{Our contributions}\label{sec: Our contributions}

Motivated by the construction of scattering CNNs \cite{mallat2012group,wiatowski2017mathematical,chew2024geometric}, we provide a unifying framework to study the properties of general deep feature extractors acting on signals defined over arbitrary measure spaces. The framework is particularly well suited to quantitatively analyze their %
stability and information propagation. 

The existing results regarding energy decay are %
conceptually closely related to each other. %
We exploit this insight to establish a generic rate for energy decay in general deep feature extractors---%
building on \cite[Proposition 3.3]{zou2020graph}, where the authors show that energy decay is exponential with increasing network depth for certain graph CNNs. As a corollary, we obtain exponential energy decay for general deep feature extractors if the underlying measure space does not contain sets of arbitrarily small positive measure. 

We extend ideas from \cite{mallat2012group,czaja2019analysis} to LCA group scattering and combine them with sumset-estimates based on \cite{ruzsa1999analog}. %
This yields exponential energy decay essentially if the frequency supports of the filters are uniformly bounded. As a byproduct, we thereby validate the experimental observations from \cite{bruna2013invariant,anden2011multiscale} regarding the energy distribution in scattering CNNs. More generally, we conclude that digital implementations of deep %
feature extractors are intrinsically informative.

\section{Information retention in general feature extractors}\label{sec: Information Retention in General Feature Extractors}

\subsection{A general model for feature extraction}\label{sec: A general model for feature extraction}

For every $\ell \in \N_0$, let $\Hcal^\layer:=L^2(\Mcal^\layer)$ be the Lebesgue-space\footnote{If there is no ambiguity, we omit the Hilbert-space index to denote the corresponding inner product $\ip{\cdot,\cdot}$ or norm $\norm{\cdot}$.} of square-integrable complex-valued functions over a measure space $\Mcal^\layer:=(M^\layer,\Mfrak^\layer,\mu^\layer)$.
Set $\Hcal:=\Hcal^{(0)}$. For a broad range of both finite- or infinite-depth neural networks, the modules for the $\ell$-th layer, $\ell\in \N$, %
of such a neural network can be described by %
an at most countable family of bounded linear operators $\Lcal^\layer \subseteq \Bcal(\Hcal^\player,\Hcal^\layer)$, which forward the information %
from layer $\ell-1$ to layer $\ell$, a 
nonexpansive\footnote{That is, $1$-Lipschitz, for all $f,g\in \Hcal^\layer$, $\norm{\sigma^\layer f-\sigma^\layer g}\leq \norm{f-g}$.} %
map $\sigma^\layer:\Hcal^\layer \to \Hcal^\layer$ with $\sigma^\layer(0)=0$, which allows to introduce additional complexity to the model%
, and a bounded linear operator $A^\layer \in \Bcal(\Hcal^\layer)$, which allows to generate an output at depth $\ell$. Likewise, $A^{(0)}\in \Bcal(\Hcal)$ allows to produce a first output at layer depth zero. For any input signal $f \in \Hcal$, the neural network associated with these modules generates the output
\begin{align*}%
  Sf:=\set{S[p]f~:~ \ell\geq 0, ~p \in \Pcal^\layer}, %
\end{align*}
where $\Pcal^\layer:=\Lcal^{(1)}\times \cdots \times \Lcal^{(\ell)}$ for $\ell\in \N$, $\Pcal^{(0)}:=\{p_e\}$ contains the unique empty path $p_e$ of length $\ell=0$ with associated operator $S[p_e]f:=A^{(0)}f$, and for any path $p=(L^{(1)},\ldots,L^{(\ell)})\in \Pcal^\layer$ of length $\ell\in \N$, 
\begin{align*}
  S[p]f:=A^{(\ell)} U[p]f, \quad
  U[p]f:=\sigma^{(\ell)} L^{(\ell)} \ldots \sigma^{(1)}L^{(1)} f. %
\end{align*}

\subsection{Energy conservation}\label{sec: Energy conservation}

The following condition is key to many desirable properties of neural networks employed as feature extractors, and is always assumed below.  %
\begin{assumption}\label{ass: Frame analog}
  For every $\ell\in \N$ and for all $h \in \Hcal^\player$,
  \begin{align*}
    \norm{A^\player h}^2+\sum_{L\in \Lcal^\layer} \norm{L h}^2 \leq \norm{h}^2. 
  \end{align*}
\end{assumption}

Let us briefly comment on the flexibility of the model. 
  Both finite- and infinite-depth neural networks can be modeled by the framework, for if a network consists only of finitely many layers $\ell=0,\ldots,D$, one can simply take, for all $\ell>D$, $\Lcal^\layer=\emptyset$ and $A^\layer=\sigma^\layer=\id_{\Hcal^{(D)}}$.

  The action of the $\ell$-th layer of a feedforward neural network \cite{rumelhart1986learning} is typically described by $f^\layer=\sigma(W^\layer f^\player)$, where $W^\layer\in \C^{d^\layer\times d^\player}$ are the (learned) weights including a potential bias term, $\sigma$ is a pointwise nonlinearity, and $f^\layer$ is the output of the $\ell$-th layer. Clearly, this fits into our framework, by setting $\Hcal=\C^{d^{(0)}}$, $\Hcal^\layer=\C^{d^\layer}$, $\Lcal^\layer=\set{W^\layer}$, $\sigma^\layer=\sigma$, for $\ell=1,\ldots,D$, as well as $A^\layer=0_{d^\layer\times d^\layer}$ for $\ell=0,\ldots,D-1$, and $A^{(D)}=\id_{\C^{d^{(D)}}}$. Assumption \ref{ass: Frame analog} then boils down to the requirement that the singular values of $W^\layer$ are $\leq 1$. Linear feedforward neural networks \cite{bah2022learning} are a special instance thereof, taking pointwise identities for $\sigma$. 
  
  Section \ref{sec: Information retention in scattering CNNs over LCA groups} studies scattering CNNs in the above framework. The framework description is the same for more general CNNs (learned or not). Pooling can be incorporated by adjusting either the nonlinearities or the linear operators $\Lcal^\layer$. %
  
  The next proposition, which collects several properties of $S$ that are based on straightforward generalizations of \cite{wiatowski2017mathematical,chew2024geometric}, makes apparent why Assumption \ref{ass: Frame analog} is natural for deep feature extractors. %

\begin{proposition}\label{prop: energy conservation}
  We have, for every $f\in \Hcal$ and every $N\in \N$,
  \begin{align}\label{eq: upper bound for total energy of a layer}
    \sum_{p \in \Pcal^{(N)}} \norm{S[p]f}^2 +  W_{N+1}(f) \leq W_N(f),
  \end{align}
  where %
  $W_N(f):=\sum_{p\in \Pcal^{(N)}} \norm{U[p]f}^2$. Thus, %
  \begin{align}\label{eq: energy conservation}
    \sum_{n=0}^{N-1} \sum_{p\in \Pcal^{(n)}} %
    \norm{S[p]f}^2 + W_N(f)\leq \norm{f}^2.
  \end{align}
  Further, $S:\Hcal \to \bigoplus_{\ell=0}^\infty \bigoplus_{p\in \Pcal^\layer} \Hcal^\layer$ is nonexpansive with $S(0)=0$, hence norm-decreasing. 
  \end{proposition}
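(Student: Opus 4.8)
The plan is to isolate the one-step energy estimate \eqref{eq: upper bound for total energy of a layer}---valid for every $N\in \N_0$---and then deduce \eqref{eq: energy conservation} and the contractivity of $S$ by telescoping. Two elementary observations are set up first: applying Assumption \ref{ass: Frame analog} to a single summand gives $\norm{L}\leq 1$ for each $L\in \Lcal^\layer$, so every $U[p]$ is a composition of nonexpansive maps; and each $\sigma^\layer$ being $1$-Lipschitz with $\sigma^\layer(0)=0$ yields $\norm{\sigma^\layer v}\leq \norm{v}$ for all $v\in \Hcal^\layer$. I would also adopt the convention $U[p_e]f=f$ (so that $W_0(f)=\norm{f}^2$) and use $\Pcal^{(N+1)}=\Pcal^{(N)}\times \Lcal^{(N+1)}$, identifying a path $q\in \Pcal^{(N+1)}$ with a pair $(p,L)$, $p\in \Pcal^{(N)}$, $L\in \Lcal^{(N+1)}$, so that $U[q]f=\sigma^{(N+1)}LU[p]f$; since the $\Lcal^\layer$ are at most countable, all sums below run over countable index sets of nonnegative terms and may be rearranged freely.

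For \eqref{eq: upper bound for total energy of a layer} I would fix $N\in \N_0$ and $p\in \Pcal^{(N)}$, set $h:=U[p]f\in \Hcal^{(N)}$, and feed $h$ into Assumption \ref{ass: Frame analog} with $\ell=N+1$. The first term there is $\norm{A^{(N)}U[p]f}^2=\norm{S[p]f}^2$ by the definition of $S[p]$ (for $N=0$ this reads $\norm{A^{(0)}f}^2=\norm{S[p_e]f}^2$), and for each $L\in \Lcal^{(N+1)}$ the contraction bound on $\sigma^{(N+1)}$ gives $\norm{U[(p,L)]f}^2=\norm{\sigma^{(N+1)}Lh}^2\leq \norm{Lh}^2$. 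Hence the assumption becomes $\norm{S[p]f}^2+\sum_{L\in \Lcal^{(N+1)}}\norm{U[(p,L)]f}^2\leq \norm{U[p]f}^2$; summing over $p\in \Pcal^{(N)}$ and re-indexing the double sum as a sum over $\Pcal^{(N+1)}$ produces exactly $\sum_{p\in \Pcal^{(N)}}\norm{S[p]f}^2+W_{N+1}(f)\leq W_N(f)$.

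Estimate \eqref{eq: energy conservation} then follows by telescoping: rewrite the above as $\sum_{p\in \Pcal^{(n)}}\norm{S[p]f}^2\leq W_n(f)-W_{n+1}(f)$ and sum over $n=0,\ldots,N-1$, so the right-hand side collapses to $W_0(f)-W_N(f)=\norm{f}^2-W_N(f)$. For the last assertion, $S(0)=0$ is immediate from $\sigma^\layer(0)=0$ together with linearity of the $L$'s and the $A^\layer$, which force $U[p]0=0$ and $S[p]0=0$ for every $p$. To get nonexpansiveness I would rerun the one-step estimate on differences: for $f,g\in \Hcal$ put $h:=U[p]f-U[p]g$, use linearity of $A^{(N)}$ to write $A^{(N)}h=S[p]f-S[p]g$, use the $1$-Lipschitz property of $\sigma^{(N+1)}$ to get $\norm{U[(p,L)]f-U[(p,L)]g}\leq \norm{Lh}$, apply Assumption \ref{ass: Frame analog} to this $h$, sum over $p\in \Pcal^{(N)}$, and telescope as before; letting $N\to\infty$ and discarding the nonnegative $W_N$-type tail gives $\norm{Sf-Sg}^2=\sum_{\ell\geq 0}\sum_{p\in \Pcal^\layer}\norm{S[p]f-S[p]g}^2\leq \norm{f-g}^2$, and choosing $g=0$ recovers the norm-decreasing statement.

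I do not anticipate a genuine obstacle---the proposition is careful bookkeeping on top of Assumption \ref{ass: Frame analog}. The only points that need attention are treating the empty path $p_e$ consistently so that the telescoping in \eqref{eq: energy conservation} really starts at $n=0$ (which rests on the $N=0$ case of \eqref{eq: upper bound for total energy of a layer} and on $W_0(f)=\norm{f}^2$), and noticing that nonexpansiveness is strictly stronger than norm contraction and hence demands redoing the one-step estimate for differences---this succeeds precisely because the only nonlinearity $\sigma^\layer$ is $1$-Lipschitz while $A^\layer$ and the filters $L$ are linear.
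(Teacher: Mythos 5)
Your proof is correct and takes essentially the same route the paper intends: the one-step estimate you derive from Assumption \ref{ass: Frame analog} together with the $1$-Lipschitz property of $\sigma^{(N+1)}$ is exactly the decomposition the paper itself uses in \eqref{eq: Upper bound on W_{N+1}(f) by Assumption 1}, and the telescoping plus the rerun on differences for nonexpansiveness is the standard argument from the cited works. Your explicit handling of the empty path ($U[p_e]f=f$, $W_0(f)=\norm{f}^2$) correctly supplies the $n=0$ term needed for \eqref{eq: energy conservation}.
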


\begin{remark}
  Assumption \ref{ass: Frame analog} can be relaxed by replacing the right-hand side (RHS) of the inequality with the weaker condition $\leq B^\layer \norm{h}^2$ for a constant $B^\layer>1$. Such a relaxation would be necessary  to model skip connections (as required in, e.g., residual neural networks). %
  However, upper bounds on $W_N(f)$ are more meaningful if $B^\layer\leq 1$. Assuming $B^\layer\leq 1$ also entails that %
  the resulting feature extractor $S$ is nonexpansive, which provably guarantees other desired properties of $S$, such as its stability with respect to small deformations of the input \cite{mallat2012group}.
  Likewise, for scattering CNNs it is common to require 
  an additional lower frame condition of the type 
    $c^\layer \norm{h}^2 \leq \norm{A^\player h}^2+\sum_{L\in \Lcal^\layer} \norm{\sigma^\layer L h}^2$,
  $c^\layer>0$ independent of $h\in \Hcal^\player$. 
  In these cases, energy conservation holds in the sense of $c\norm{f}^2\leq \norm{Sf}^2\leq \norm{f}^2$, where $c=\prod_{\ell=1}^\infty c^\layer\geq 0$, cf. \cite{wiatowski2017energy}.

\end{remark}

Starting from \eqref{eq: upper bound for total energy of a layer}, a telescoping argument yields
\begin{align*}
  \sum_{\ell=N}^\infty \sum_{p\in \Pcal^\layer} \norm{S[p]f}^2 
  \leq \sum_{\ell=N}^\infty \left(W_\ell(f)-W_{\ell+1}(f)\right) \leq W_N(f). %
\end{align*} 
The infinite sum $\sum_{\ell=N}^\infty \sum_{p\in \Pcal^\layer} \norm{S[p]f}^2$ provides a natural measure to quantify the information contained in the layers of depth $\geq N$. This quantity is bounded above by $W_N(f)$. Assuming that $Sf$ is nontrivial, truncating the (possibly infinitely deep) feature extractor after the first $N$ layers looses at most $W_N(f)/\norm{Sf}^2$ percent of the total energy that is distributed across the entire network, suggesting that a fast decay of this quantity is desirable (see also \cite{wiatowski2017energy}). Rapid energy decay across layers is also provably closely related to the stability of scattering CNNs with respect to pertubations of the input \cite[Corollary 2.15]{mallat2012group}, \cite[Theorem 5]{chew2024geometric}.

On the basis of Assumption \ref{ass: Frame analog}, we derive a generic upper bound for $W_N(f)$, for arbitrary $f\in \Hcal$ and $N\in \N$. Let us start by outlining the general proof strategy, the starting point of which is similar to the one used in previous works on energy propagation in scattering CNNs, e.g. \cite{czaja2019analysis}. %
By Assumption \ref{ass: Frame analog}, %
\begin{align}\label{eq: Upper bound on W_{N+1}(f) by Assumption 1}
    W_{N+1}(f)&= \sum_{p \in \Pcal^{(N)}}\sum_{L\in \Lcal^{(N+1)}} \norm{\sigma^{(N+1)}LU[p]f}^2 \nonumber\\
    &\leq \sum_{p \in \Pcal^{(N)}} \left(\norm{U[p]f}^2-\norm{A^{(N)}U[p]f}^2\right).
\end{align}
We introduce 
\begin{align}
  \iota_N :&= \inf_{h \in \Hcal, p \in \Pcal^{(N)}:U[p]h\neq 0} \frac{\norm{A^{(N)}U[p]h}^2}{\norm{U[p]h}^2} \nonumber\\
  &=\inf_{0\neq g\in\Rcal^{(N)}}\norm{A^{(N)}\left(\frac{g}{\norm{g}}\right)}^2, \label{def: iota_N}
\end{align}
where $\Rcal^{(N)}:=\bigcup_{p \in \Pcal^{(N)}} \Rcal(U[p])\subseteq \Hcal^{(N)}$, 
$\Rcal$ referring to the range of the operators. Using this to estimate \eqref{eq: Upper bound on W_{N+1}(f) by Assumption 1}, we obtain %
\begin{align}
    W_{N+1}(f)%
    &\leq \sum_{p \in \Pcal^{(N)}} \left(1-\iota_N\right) \norm{U[p]f}^2 \nonumber\\
    &= W_N(f) \cdot \left(1-\iota_N\right) \nonumber\\
    &\leq W_1(f) \cdot \prod_{\ell=1}^{N}(1-\iota_\ell)\nonumber\\
    &\leq \left(\norm{f}^2-\norm{A^{(0)}f}^2\right) \cdot \prod_{\ell=1}^{N}(1-\iota_\ell). \label{eq: introduction of iota_N}
\end{align}

Describing the range $\Rcal(U[p])$ of the operator $U[p]$ for any path $p$ of length $\geq 2$ is generally a hard problem, and so is the explicit calculation of $\iota_N$. %

\subsection{A generic rate for energy decay}\label{subsec: generic rate}
  We reduce the question of energy decay to an optimization problem involving the point spectrum of the operators ${A^\layer}^\ast A^\layer$, $\ell \in \N$. To this end, let  %
  \begin{align*}
    E^\layer:=\set{(\lambda,\eta)\in [0,1]\times \Hcal^\layer \middle|{A^\layer}^\ast A^\layer \eta = \lambda \eta,  \norm{\eta}=1}.
  \end{align*}
  Suprema (and infima) over empty sets are to be read as zero.
\begin{theorem}\label{thm: energy decay in the general setting}
  We have, for every $f\in \Hcal$ and every $N\in \N$,\vspace*{-0.3mm}
  \begin{align*}
    W_N(f)\leq \left(\norm{f}^2-\norm{A^{(0)}f}^2\right) \cdot \prod_{\ell=1}^{N-1} \left(1-C^\layer\right),
  \end{align*}
  \vspace*{-0.3mm} where $C^\layer:=1$ if $\Rcal^\layer=\{0\}\subseteq \Hcal^\layer$ and else %
  \begin{align}\label{def: constant C^ell}
    C^\layer:=\inf\limits_{0 \neq g\in \Rcal^\layer}
    ~\sup\limits_{(\lambda,\eta)\in E^\layer}~ \left|\ip{g/\norm{g},\eta}\right|^2 \cdot \lambda \geq 0.
  \end{align}
\end{theorem}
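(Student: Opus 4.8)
\emph{Proof plan.} The plan is to reduce the statement to the single inequality $C^\layer \le \iota_\ell$ for every $\ell\in\N$, where $\iota_\ell$ is the quantity from \eqref{def: iota_N}. The chain \eqref{eq: introduction of iota_N} already gives $W_N(f)\le\left(\norm{f}^2-\norm{A^{(0)}f}^2\right)\prod_{\ell=1}^{N-1}(1-\iota_\ell)$ for every $N\in\N$ (for $N=1$ this is just Assumption \ref{ass: Frame analog} applied with $\ell=1$, $h=f$, together with $\sigma^{(1)}(0)=0$ and the nonexpansiveness of $\sigma^{(1)}$). Assumption \ref{ass: Frame analog} also forces $\norm{A^\layer h}\le\norm{h}$, hence $0\le\iota_\ell\le 1$, while $0\le C^\layer\le 1$ by Cauchy--Schwarz and $\lambda\le 1$. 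Once $C^\layer\le\iota_\ell$ is established, this yields $0\le 1-\iota_\ell\le 1-C^\layer\le 1$, and multiplying these inequalities factorwise over $\ell=1,\dots,N-1$ turns the known bound into the claimed one. Conceptually, this step trades the essentially intractable $\iota_\ell$ --- an infimum over the ranges $\Rcal(U[p])$, which are hard to describe for long paths, as already noted before the statement --- for a lower bound that only sees the point spectrum of ${A^\layer}^\ast A^\layer$ and its overlap with $\Rcal^\layer$.

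To prove $\iota_\ell\ge C^\layer$, the key step is the pointwise estimate
\[
  \norm{A^\layer g}^2 \;\ge\; \abs{\ip{g,\eta}}^2\cdot\lambda
  \qquad\text{for every unit vector }g\in\Hcal^\layer\text{ and every }(\lambda,\eta)\in E^\layer .
\]
Writing $T:={A^\layer}^\ast A^\layer$ --- a bounded, self-adjoint, positive semidefinite operator with $\norm{A^\layer g}^2=\ip{Tg,g}$ --- I would split $g=\ip{g,\eta}\eta+r$ with $r:=g-\ip{g,\eta}\eta\perp\eta$ (using $\norm{\eta}=1$). Expanding $\ip{Tg,g}$ and using $T\eta=\lambda\eta$, the self-adjointness of $T$, and $\lambda\in\R$, both cross terms vanish (one since $\ip{\eta,r}=0$, the other since $\ip{Tr,\eta}=\ip{r,T\eta}=\lambda\ip{r,\eta}=0$), leaving $\ip{Tg,g}=\abs{\ip{g,\eta}}^2\lambda+\ip{Tr,r}$; positivity of $T$ gives $\ip{Tr,r}\ge 0$, which is the estimate. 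Taking the supremum over $(\lambda,\eta)\in E^\layer$ and then the infimum over unit $g\in\Rcal^\layer$ gives precisely $\iota_\ell\ge C^\layer$, and substituting into \eqref{eq: introduction of iota_N} as above finishes the proof.

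I do not expect a genuine analytic obstacle. The one point worth flagging is that in infinite dimensions ${A^\layer}^\ast A^\layer$ may fail to admit an orthonormal eigenbasis, or any eigenvectors at all; this is exactly why the argument is run one eigenpair at a time through the quadratic form $\ip{Tg,g}$, and why $C^\layer$ is phrased via $E^\layer$ rather than the full spectrum. The associated empty-set conventions are harmless: if $E^\layer=\emptyset$ the supremum is read as zero and the estimate $\norm{A^\layer g}^2\ge 0$ still holds, and if $\Rcal^\layer$ contains no unit vector then $\iota_\ell=0=C^\layer$. The remaining care is bookkeeping --- checking the index shift between \eqref{eq: introduction of iota_N} and the statement down to $N=1$, and confirming that all factors $1-\iota_\ell,\,1-C^\layer$ lie in $[0,1]$ (via $\norm{A^\layer}\le 1$) so that factorwise multiplication of the inequalities is legitimate.
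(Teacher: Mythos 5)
Your proposal is correct and follows essentially the same route as the paper: reduce to showing $\iota_\ell\ge C^\layer$ via \eqref{eq: introduction of iota_N}, and establish the key inequality $\norm{A^\layer g}^2\ge\abs{\ip{g,\eta}}^2\lambda$ by isolating the $\eta$-component of $g$. The only (harmless) difference is that you argue through the quadratic form of $T={A^\layer}^\ast A^\layer$ and its positivity on the orthogonal remainder, whereas the paper extends $\eta$ to an orthonormal basis and applies the Pythagorean theorem to the images $A^\layer\eta\perp A^\layer\upsilon$; the two are equivalent formalizations of the same idea.
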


\begin{proof}
  The proof is given in Appendix \ref{sec: app - thm: energy decay in the general setting}. 
\end{proof}

\begin{remark}\label{rem: generic decay rate}
  Note that $\lambda, C^\layer \in [0,1]$, by Assumption \ref{ass: Frame analog}.

  While $E^\layer=\emptyset$ may occur in the very general case, we have $E^\layer\neq\emptyset$ in many practically relevant settings. In fact, the setting is inspired by that of scattering CNNs over general measure spaces \cite{chew2024geometric}, where $A^\layer=H^t$ is the diffusion operator defined in \cite[Equ. (8),(10)]{chew2024geometric} for some $t>0$, so $E^\layer=\{(g(\lambda_k)^{2t},\varphi_k)~|~k\in I\}$ in the notation of \cite{chew2024geometric}. %
  
  Finally, we note that there are feature extractors for which the generic bound obtained by this approach is sharp, cf. \cite{zou2020graph}.
\end{remark}

Theorem \ref{thm: energy decay in the general setting} may---at first glance---seem like an artificial reduction to another difficult problem, namely bounding the constant $C^\layer$ from below. However, at least for some types of specific feature extractors (respectively, their underlying signal spaces), it is actually easy to do so, as we demonstrate below. Indeed, Theorem \ref{thm: energy decay in the general setting} generalizes the ideas of \cite[Proposition 3.3]{zou2020graph} that led the authors to conclude exponential energy decay for certain graph CNNs. Likewise, we conclude exponential energy decay for general feature extractors if the measure space does not contain sets of arbitrarily small positive measure, which recovers \cite[Proposition 3.3]{zou2020graph} as a special case of our next corollary\footnote{It can also be seen to recover \cite[Theorem 3.4]{perlmutter2023understanding} with a minor adjustment in our setting.}.  %

  \begin{corollary}\label{cor: generic rate for discrete measure spaces}
     Assume that $C_{\Mcal^\layer}:=\inf_m ~ \mu^\layer(m)>0$, where the infimum is taken over all sets $m\in\Mfrak^\layer$ for which $\mu^\layer(m)>0$. Suppose %
     further that $\sigma^\layer f\geq 0$ ($\mu^\layer$-a.e.) for all $f \in \Hcal^\layer$ and %
    that there exists a pair $(\lambda,\eta)\in E^\layer$ such that $\lambda^{1/2} \eta \geq C_{A^\layer}^{1/2}$ holds ($\mu^\layer$-a.e.) for a constant $C_{A^\layer}>0$. %
    Then, we have $C^\layer=1$ if $\Rcal^\layer=\{0\}\subseteq \Hcal^\layer$ and else
    \begin{align*}
      C^\layer\geq C_{\Mcal^\layer} C_{A^\layer}>0.
    \end{align*}
    In particular, if this holds for the first $\ell=1,\ldots,N-1$ layers of the neural network, then we have, for all $f\in\Hcal$,
    \begin{align*}
      W_N(f)\leq \left(\norm{f}^2-\norm{A^{(0)}f}^2\right) \cdot \prod_{\ell=1}^{N-1} \left(1-C_{\Mcal^\layer} C_{A^\layer}\right).
    \end{align*}
  \end{corollary}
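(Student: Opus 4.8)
The plan is to prove the layerwise lower bound $C^\layer \geq C_{\Mcal^\layer} C_{A^\layer}$ under the stated hypotheses and then to substitute it into Theorem~\ref{thm: energy decay in the general setting}. Fix a layer $\ell$ as in the statement; if $\Rcal^\layer$ contains no unit vector the conclusion is vacuous (then $W_N(f)=0$ for all $N\geq\ell$), so assume otherwise. Let $(\lambda,\eta)\in E^\layer$ be the eigenpair supplied by the hypothesis, so $\lambda^{1/2}\eta \geq C_{A^\layer}^{1/2}$ holds $\mu^\layer$-a.e.; since $C_{A^\layer}>0$, interpreting this inequality pointwise forces $\lambda>0$ and $\eta\geq (C_{A^\layer}/\lambda)^{1/2}>0$ $\mu^\layer$-a.e., in particular $\eta$ is real-valued. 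Moreover, any $g\in\Rcal^\layer$ has the form $g=U[p]h$ for some path $p\in\Pcal^\layer$ and $h\in\Hcal$, and the outermost operation in $U[p]h$ is $\sigma^\layer$; hence the hypothesis that $\sigma^\layer$ maps into nonnegative functions forces $g\geq 0$ $\mu^\layer$-a.e. Restricting the supremum in \eqref{def: constant C^ell} to the single pair $(\lambda,\eta)$, it thus suffices to show $\lambda\,\abs{\ip{g,\eta}}^2\geq C_{\Mcal^\layer}C_{A^\layer}$ for every $g\in\Rcal^\layer$ with $\norm{g}=1$.

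The crux is an elementary mass-concentration estimate: if $g\in\Hcal^\layer$ satisfies $g\geq 0$ $\mu^\layer$-a.e.\ and $\norm{g}=1$, then $\int_{M^\layer} g\dmul \geq C_{\Mcal^\layer}^{1/2}$. The point is that the uniform lower bound $C_{\Mcal^\layer}$ on the measure of every positive-measure set forces $g$ to be essentially bounded: whenever $\mu^\layer(\set{g>a})>0$ for some $a>0$ we have $\mu^\layer(\set{g>a})\geq C_{\Mcal^\layer}$ and hence $1=\norm{g}^2\geq a^2C_{\Mcal^\layer}$, so $\mu^\layer(\set{g>a})=0$ for every $a>C_{\Mcal^\layer}^{-1/2}$, i.e.\ $g\leq C_{\Mcal^\layer}^{-1/2}$ $\mu^\layer$-a.e. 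Multiplying this bound by $g\geq 0$ gives $g^2\leq C_{\Mcal^\layer}^{-1/2}g$ $\mu^\layer$-a.e., and integrating yields $1=\int_{M^\layer} g^2\dmul\leq C_{\Mcal^\layer}^{-1/2}\int_{M^\layer} g\dmul$, which is the estimate. (Alternatively one can observe that the hypothesis forces $\Mcal^\layer$ to be purely atomic with all atoms of measure $\geq C_{\Mcal^\layer}$ and argue atom by atom, but the above is shorter.)

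Granting the estimate, the claim is immediate: since $g$ and $\eta$ are real-valued and nonnegative a.e.\ and $\lambda^{1/2}$ is a constant,
\begin{align*}
  \lambda\,\abs{\ip{g,\eta}}^2
  &= \left(\int_{M^\layer} g\cdot\lambda^{1/2}\eta\dmul\right)^{2} \\
  &\geq \left(C_{A^\layer}^{1/2}\int_{M^\layer} g\dmul\right)^{2}
  \geq C_{A^\layer}\,C_{\Mcal^\layer},
\end{align*}
using $\lambda^{1/2}\eta\geq C_{A^\layer}^{1/2}$ $\mu^\layer$-a.e.\ together with the estimate. Taking the infimum over admissible $g$ gives $C^\layer\geq C_{\Mcal^\layer}C_{A^\layer}>0$. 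For the displayed bound on $W_N(f)$, feed this into Theorem~\ref{thm: energy decay in the general setting}: since $C_{\Mcal^\layer}C_{A^\layer}\leq C^\layer\leq 1$ by Remark~\ref{rem: generic decay rate}, each factor satisfies $0\leq 1-C^\layer\leq 1-C_{\Mcal^\layer}C_{A^\layer}\leq 1$, so the finite product $\prod_{\ell=1}^{N-1}(1-C^\layer)$ is at most $\prod_{\ell=1}^{N-1}(1-C_{\Mcal^\layer}C_{A^\layer})$. The only genuine obstacle is the mass-concentration estimate, and even that reduces to the single observation that the measure lower bound makes $\norm{g}_\infty\leq C_{\Mcal^\layer}^{-1/2}\norm{g}_2$; the rest is bookkeeping.
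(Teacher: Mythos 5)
Your proof is correct and follows essentially the same route as the paper: show every $g\in\Rcal^\layer$ is nonnegative, bound $\lambda\abs{\ip{g,\eta}}^2$ below by $C_{A^\layer}\norm{g}_{L^1}^2$ using the single hypothesized eigenpair, pass from the $L^1$- to the $L^2$-norm via the no-small-atoms condition, and feed the resulting bound on $C^\layer$ into Theorem \ref{thm: energy decay in the general setting}. The only difference is that you prove the embedding $\norm{g}_{L^1}\geq C_{\Mcal^\layer}^{1/2}\norm{g}_{L^2}$ explicitly (via the $\norm{g}_\infty\leq C_{\Mcal^\layer}^{-1/2}\norm{g}_2$ observation), whereas the paper simply cites it as a well-known consequence of $C_{\Mcal^\layer}>0$.
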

  \begin{proof}
    The proof is given in Appendix \ref{sec: app - cor: generic rate for discrete measure spaces}.
  \end{proof}
  \begin{remark}
    The assumptions of Corollary \ref{cor: generic rate for discrete measure spaces} imply that $\mu^\layer(M^\layer)<\infty$. %
    Also note that the derived lower bound is invariant under rescaling of the measure, which is reasonable as this  leaves the structure of the feature extractor invariant.
     
    Let us now continue the discussion from %
    above regarding the concreteness and applicability of our results. First, note that the constraint on the measure space is satisfied for the counting measure, in which case $C_{\Mcal^\layer}=1$. This may even be regarded as the most practically relevant measure space, having a feature engineer in mind who works with whatever feature extractor on a digital computer.
    Second, there is a whole class of feature extractors, for which there exists a constant eigenfunction, $\eta\equiv C_{A^\layer}^{1/2}=\mu^\layer(M^\layer)^{-1/2}$. Indeed, for scattering CNNs defined via diffusion operators as in \cite[Equ. (8)]{chew2024geometric}, the eigenfunction $\varphi_0$ of $\Lcal$ (in the notation of \cite{chew2024geometric}) is often constant \cite[Remark 3]{chew2024geometric}, %
    and we have $(1,\varphi_0)\in E^\layer$ in the context of the above corollary; see also \cite[Theorems 2, 5]{chew2024geometric} and \cite[Section 5.1]{mallat2012group}, which are based on similar assumptions.

    Finally, the last two arguments are precisely the reason why Corollary \ref{cor: generic rate for discrete measure spaces} qualitatively\footnote{There is a difference in the resulting basis of the exponential decay:  \cite{zou2020graph} guarantees $C^\layer\geq 2/\#G$, %
    while our result only implies the slightly weaker bound $C^\layer\geq 1/\#G$. This gap ultimately stems from a low-pass condition, which is assumed in \cite[Equ. (11)]{zou2020graph}, while our Corollary \ref{cor: generic rate for discrete measure spaces} does not require an analog thereof.} %
    recovers \cite[Proposition 3.3]{zou2020graph}, also implying exponential energy decay for arbitrary input signals of the graph scattering CNNs described in \cite{zou2020graph}. %
  \end{remark}
  \section{Information retention in scattering CNNs over LCA groups}\label{sec: Information retention in scattering CNNs over LCA groups}

\subsection{LCA group scattering}\label{sec: LCA group scattering}
  In this section, we demonstrate how structural information about the signal domain can be exploited to derive explicit estimates for the energy propagation in LCA group scattering CNNs. The tools for the following results rely on the properties of LCA groups and their Fourier analysis; a comprehensive overview of which can be found in, e.g., \cite{rudin2017fourier}. 
  
  Let $G$ be an LCA group, and let $\widehat{G}$ be its Pontryagin dual. Fix a Haar measure $\mu_G$ on $G$.
  For $f \in L^1(G)\cap L^2(G)$, we define the Fourier transform of $f$ by
  \[\Fcal f(\xi):=\widehat{f}(\xi):=\int_{G} f(x) \overline{\xi(x)} \dmuG (x), \quad \xi \in \widehat{G}.\]
  In the following, we always use the unique Haar measure $\mu_{\widehat{G}}$ on $\widehat{G}$ such that the Fourier transform extends unitarily to an operator $\mathcal{F}:L^2(G)\to L^2(\widehat{G})$.

  Throughout this section, let $\Psi\cup\{\chi\}\subseteq L^1(G)\cap L^2(G)$ be a semi-discrete Parseval frame, i.e., an at most countable family of convolution filters that satisfy %
  \begin{align}\label{ass: semi-discrete Parseval frame}
    \forall h \in L^2(G):~\norm{h*\chi}^2+\sum_{\psi\in \Psi} \norm{h*\psi}^2 =\norm{h}^2, %
  \end{align}
  which is equivalent to the Littlewood-Paley condition,
  \begin{align}\label{eq: Littlewood-Paley condition}
    |\widehat{\chi}(\xi)|^2 + \sum_{\psi \in \Psi} |\widehat{\psi}(\xi)|^2=1, \quad \text{a.e. } \xi \in \widehat{G}.
  \end{align}  
  For $g\in L^1(G)\cap L^2(G)$, let $\Ccal_g\in \Bcal(L^2(G))$ be the convolution operator $f \mapsto f*g.$
  The filters $\Psi\cup\{\chi\}$ induce a scattering CNN \cite{mallat2012group,bruna2013invariant}, with each layer sharing the same architecture, %
  \begin{align*}
    \Hcal^\layer:=L^2(G), \; \Lcal^\layer:=\set{\Ccal_\psi~\middle|~ \psi \in \Psi}, \; A^\player:=\Ccal_\chi,
  \end{align*}
  and the pointwise modulus $\sigma^\layer(f):=|f|$ for $f\in L^2(G)$ as its nonlinearity. By \eqref{ass: semi-discrete Parseval frame}, Assumption \ref{ass: Frame analog} and hence \eqref{eq: energy conservation} hold with equality. 
    By Proposition \ref{prop: energy conservation}, the scattering transform associated with these filters is a well-defined, norm-decreasing (nonlinear) operator $S:L^2(G)\to \ell^2\left(L^2(G)\right)$. %
  
    \subsection{A setting-specific rate for energy decay}\label{sec: A setting-specific rate for energy decay}
  
  The results in this section are based on scattering-specific ideas from \cite{mallat2012group,czaja2019analysis,zou2020graph}, which we combine with analogs for sumset-estimates based on ideas from \cite{ruzsa1999analog}, \cite[Section 2.4]{tao2006additive}.

  The modulus shifts part of the frequency content of a signal towards a neighborhood of $\mathds{1}_G\in \widehat{G}$. It is hence natural to assume that the output-generating filter $\chi$ satisfies a low-pass condition (which, by \eqref{eq: Littlewood-Paley condition}, is equivalent to a high-pass condition on the filters $\Psi$) with the intention that this makes the scattering CNN more informative \cite{mallat2012group,wiatowski2017energy,czaja2019analysis}. %
    \begin{assumption}\label{ass: low-pass/high-pass condition}
  	Let
  	$\Gamma_\Psi:=\widehat{G}\setminus \bigcup_{\psi \in \Psi} \{\xi \in \widehat{G}~|~\widehat{\psi}(\xi)\neq 0\}$. We assume that $\Gamma_\Psi$ is a neighborhood of $\mathds{1}_{G}\in\widehat{G}$. 
  \end{assumption}
  The next theorem confirms the intuition that the extracted features tend to carry more energy when the frequency gap $\Gamma_\Psi$ is large and the Fourier supports of $\Psi$ are uniformly small. Part of this stems from the low-pass filter $\chi$ capturing more signal information, though the full picture is more nuanced.

  \begin{theorem}\label{thm: rate for finite LCA group scattering}
   Let $G$ be compact. If there exists $S\in \N$ with $\#\supp(\widehat{\psi})\leq S$ ($\psi \in \Psi$), %
    then, for all $f \in L^2(G)$, $N\in \N$, 
    \begin{align*}%
      W_N(f)\leq \left(\norm{f}^2-\norm{f*\chi}^2\right) \cdot \left(1-1/S\right)^{N-1}. 
    \end{align*}
	If $G$ is finite, one can choose $S\leq \#G-\#\Gamma_\Psi<\infty$.
  \end{theorem}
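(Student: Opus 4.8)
The plan is to bound $\iota_\ell$ from \eqref{def: iota_N} below by $S^{-1}$ and then feed this into the telescoping estimate \eqref{eq: introduction of iota_N}. In the present setting $A^\layer=\Ccal_\chi$ for every $\ell\ge 0$, so $\iota_\ell=\inf\set{\norm{g*\chi}^2 : g\in\Rcal^\layer,\ \norm g=1}$, and \eqref{eq: introduction of iota_N} (with the index shifted and $A^{(0)}=\Ccal_\chi$) reads $W_N(f)\le\bigl(\norm f^2-\norm{f*\chi}^2\bigr)\prod_{\ell=1}^{N-1}(1-\iota_\ell)$; it therefore suffices to show $\iota_\ell\ge S^{-1}$ for every $\ell\in\N$. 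First I would unwind $\Rcal^\layer$: for a path $p$ whose last factor is $\Ccal_\psi$ with $\psi\in\Psi$, every $g\in\Rcal(U[p])$ has the form $g=\abs v$ with $v:=w*\psi$ for some $w\in L^2(G)$, because the nonlinearity is the modulus. Consequently $\widehat v=\widehat w\cdot\widehat\psi$ is supported in $\supp\widehat\psi$, a set of cardinality $\le S$ (compactness of $G$ makes $\widehat G$ discrete); so $v$ is a trigonometric polynomial and $g=\abs v\ge 0$ lies in $L^1(G)\cap L^\infty(G)$. Normalizing $\mu_G(G)=1$---harmless, as the asserted inequality scales consistently---the dual group $\widehat G$ carries the counting measure.

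The core combines two elementary estimates for such a $g$ with $\norm g=1$. (i) Nonnegativity of $g$ gives $\abs{\widehat g(\xi)}\le\widehat g(\mathds{1}_{G})=\norm g_{L^1}=\norm v_{L^1}$ for all $\xi\in\widehat G$; moreover $\mathds{1}_{G}\in\Gamma_\Psi$, so each $\widehat\psi$ vanishes at $\mathds{1}_{G}$ and \eqref{eq: Littlewood-Paley condition} forces $\abs{\widehat\chi(\mathds{1}_{G})}=1$. Keeping only the $\mathds{1}_{G}$-term of $\norm{g*\chi}^2=\sum_{\xi\in\widehat G}\abs{\widehat g(\xi)}^2\abs{\widehat\chi(\xi)}^2$ yields $\norm{g*\chi}^2\ge\norm v_{L^1}^2$. (ii) A discrete uncertainty estimate for the pre-modulus $v$: from $\norm v_{L^\infty}\le\norm{\widehat v}_{\ell^1}\le(\#\supp\widehat v)^{1/2}\norm{\widehat v}_{\ell^2}$, from $\norm v_{L^2}^2\le\norm v_{L^\infty}\norm v_{L^1}$ (Hölder), and from Plancherel, one obtains $\norm v_{L^1}^2\ge(\#\supp\widehat v)^{-1}\norm v_{L^2}^2\ge S^{-1}\norm v_{L^2}^2=S^{-1}\norm g^2$. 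Chaining (i) and (ii) gives $\norm{g*\chi}^2\ge S^{-1}$, hence $\iota_\ell\ge S^{-1}$, and the bound on $W_N(f)$ follows from the telescoping inequality above.

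For the final assertion, when $G$ is finite the definition of $\Gamma_\Psi$ directly gives $\supp\widehat\psi\subseteq\widehat G\setminus\Gamma_\Psi$ for every $\psi\in\Psi$, so $\#\supp\widehat\psi\le\#\widehat G-\#\Gamma_\Psi=\#G-\#\Gamma_\Psi<\infty$ is an admissible value of $S$. Once the reduction to $\iota_\ell\ge S^{-1}$ is in place, I expect step (ii) to be the only genuine obstacle: one cannot argue through $\supp\widehat g$, because the modulus typically spreads the spectrum over all of $\widehat G$ (so $\#\supp\widehat g=\infty$), and the remedy is precisely to route the bound through the finitely spectrally supported function $v$ together with the extremality of the nonnegative function $g$ at the trivial character. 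The Haar-normalization is a minor bookkeeping point handled by the scaling invariance noted above.
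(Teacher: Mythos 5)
Your proof is correct and follows essentially the same route as the paper's: reduce to the bound $\iota_\ell\geq S^{-1}$, pick off the trivial character in $\norm{g*\chi}^2$ using $g\geq 0$ and $\abs{\widehat{\chi}(\mathds{1}_G)}=1$ (forced by the Littlewood--Paley condition and $\mathds{1}_G\in\Gamma_\Psi$), and then exploit the spectral support of the pre-modulus signal $v=w*\psi$ to get $\norm{v}_{L^1}^2\geq S^{-1}\norm{v}_{L^2}^2$. The only cosmetic difference is in that last inequality, which you derive via $\norm{v}_{L^\infty}\leq\norm{\widehat{v}}_{\ell^1}$ and H\"older while the paper uses the dual chain $\norm{\widehat{v}}_{\infty}\leq\norm{v}_{L^1}$ together with the $\ell^2$--$\ell^\infty$ comparison on the finite support; both yield the identical constant.
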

  \begin{proof}
    The proof is given in Appendix \ref{sec: app - thm: rate for finite LCA group scattering}.
  \end{proof}
  \begin{remark}\label{rm: exp decay for nonabelian finite group scattering}
    The proof of Theorem \ref{thm: rate for finite LCA group scattering} makes use of the (accessible) Fourier analysis for abelian (compact) groups, whose analogs are more involving in the nonabelian case. If $G$ is finite nonabelian, Corollary \ref{cor: generic rate for discrete measure spaces} still ensures exponential energy decay, albeit with a slightly worse basis compared with the abelian case.  In fact, under the low-pass condition $|\int \chi \dmuG|^2=1$, taking $(\lambda,\eta)=(1,\mu_G(G)^{-1/2}\cdot \mathds{1}_G)$ in Corollary \ref{cor: generic rate for discrete measure spaces} yields $\eta*\chi*\chi^*= \eta\geq \mu_G(G)^{-1/2}$. Since $\mu_G(\{e_G\})/\mu_G(G)=1/\#G$, we have, for all $f \in \C^G$, $N \in \N$,
    \begin{align*}
      W_N(f)\leq \left(\norm{f}^2-\norm{f*\chi}^2\right) \cdot \left(1-1/\#G\right)^{N-1}.
    \end{align*}
  \end{remark}

  \begin{theorem}\label{thm: rate for general LCA group scattering}
    Let $G$ be any LCA group. There exists an open neighborhood $\Gamma=\Gamma^{-1}\subseteq \widehat{G}$ of $\mathds{1}_G\in \widehat{G}$ with $\Gamma^8\subseteq \Gamma_\Psi$. For any such $\Gamma$ we have, for all $f\in L^2(G)$, $N\in \N$,%
    \begin{align}\label{eq: upper bound for W_N, LCA scattering}
      W_N(f)\leq \left(\norm{f}^2-\norm{f*\chi}^2\right) \cdot \alpha(\Psi,\Gamma)^{N-1},
    \end{align}
    where 
    \begin{align*}
      \alpha(\Psi,\Gamma)=1-\frac{\muGhat{\Gamma^2}^2}{\Ncal(\Psi,\Gamma^2)\cdot \muGhat{\Gamma^4}^2} \in [0,1],
    \end{align*}
    and $\Ncal(\Psi,\Gamma^2)$ denotes the smallest number $n\in \N\cup \{\infty\}$ with the property that for any $\psi \in \Psi$ there exist $\xi_1,\ldots,\xi_n\in\widehat{G}$ such that $\supp(\widehat{\psi})\subseteq \bigcup_{k=1}^n \xi_k \Gamma^2$. 
    Furthermore, it holds that
    \begin{align}\label{eq: upper bound on covering number}
      \Ncal(\Psi,\Gamma^2) \leq \sup_{\psi \in \Psi} \left\lfloor \frac{\mu_{\widehat{G}}(\Gamma \cdot \supp(\widehat{\psi}))}{\muGhat{\Gamma}} \right\rfloor.
    \end{align}
    Thus, $\alpha(\Psi,\Gamma)<1$ if $\sup_{\psi\in\Psi}\mu_{\widehat{G}}(\Gamma \cdot \supp(\widehat{\psi}))<\infty$, i.e., we then have exponential energy decay globally on $L^2(G)$. In particular, if $G$ is discrete, exponential energy decay holds globally on $L^2(G)$.
  \end{theorem}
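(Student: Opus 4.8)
The plan is to deduce the bound from Theorem~\ref{thm: energy decay in the general setting} by lower-bounding the constants $C^\layer$; the key is to quantify how much energy the modulus nonlinearity pushes towards $\mathds{1}_G\in\widehat G$. Here $A^\layer=\Ccal_\chi$, so under $\mathcal F$ the operator $\Ccal_\chi^\ast\Ccal_\chi$ becomes multiplication by $\abs{\widehat\chi}^2$; since $\Gamma_\Psi$ is a neighborhood of $\mathds{1}_G$ it has positive Haar measure, and \eqref{eq: Littlewood-Paley condition} gives $\abs{\widehat\chi}\equiv1$ on $\Gamma_\Psi$, so $(1,\eta)\in E^\layer$ for every unit $\eta\in\Hcal^\layer$ with $\supp\widehat\eta\subseteq\Gamma_\Psi$. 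Consequently $C^\layer\geq\inf_{g\in\Rcal^\layer:\,\norm{g}=1}\int_{\Gamma_\Psi}\abs{\widehat g(\xi)}^2\dmuGhat(\xi)$. Since for every $\ell\geq1$ each $g\in\Rcal^\layer$ equals $|v|$ with $v=U[p']f*\psi$, and hence $\supp\widehat v\subseteq\supp\widehat\psi$ for some $\psi\in\Psi$, it remains to prove the \emph{one-step estimate}
\[
  \int_{\Gamma_\Psi}\abs{\widehat{|v|}(\xi)}^2\dmuGhat(\xi)\ \geq\ \bigl(1-\alpha(\Psi,\Gamma)\bigr)\,\norm{v}^2 \qquad \bigl(\psi\in\Psi,\ \supp\widehat v\subseteq\supp\widehat\psi\bigr).
\]
Given this, $C^\layer\geq1-\alpha(\Psi,\Gamma)$ for all $\ell\geq1$ (all layers being identical), and Theorem~\ref{thm: energy decay in the general setting} together with $\norm{A^{(0)}f}^2=\norm{f*\chi}^2$ yields \eqref{eq: upper bound for W_N, LCA scattering}.

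I would first dispose of the auxiliary points. The neighborhood $\Gamma$ exists: $\widehat G$ is locally compact, so $\mathds{1}_G$ has a relatively compact open neighborhood contained in $\Gamma_\Psi$, and iterating continuity of the group multiplication three times produces an open symmetric relatively compact $\Gamma\ni\mathds{1}_G$ with $\Gamma^8\subseteq\Gamma_\Psi$; then $\muGhat{\Gamma^k}<\infty$ for every $k$. If an admissible $\Gamma$ is so large that $\muGhat{\Gamma^4}=\infty$, or if $\Ncal(\Psi,\Gamma^2)=\infty$, then $\alpha(\Psi,\Gamma)=1$ and \eqref{eq: upper bound for W_N, LCA scattering} is vacuous since $W_N\leq W_1$ always; so I may assume both finite, in particular $\muGhat{\supp\widehat\psi}\leq\Ncal(\Psi,\Gamma^2)\cdot\muGhat{\Gamma^2}<\infty$, whence every relevant $v$ lies in $L^2(G)\cap L^\infty(G)$ and $|v|^2\in L^1(G)\cap L^2(G)$. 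This last point is important because $|v|$ itself need not be integrable, so the argument must be run through $|v|^2$ and Plancherel rather than through pointwise Fourier values of $|v|$.

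For the one-step estimate, since $\Gamma^4\subseteq\Gamma^8\subseteq\Gamma_\Psi$ it suffices to bound $\int_{\Gamma^4}\abs{\widehat{|v|}(\xi)}^2\dmuGhat(\xi)$ from below. Writing $\Ncal:=\Ncal(\Psi,\Gamma^2)$, I would cover $\supp\widehat\psi\subseteq\bigcup_{k=1}^{\Ncal}\xi_k\Gamma^2$, disjointify, and split $v=\sum_{k=1}^{\Ncal}v_k$ with $\widehat{v_k}$ supported on pairwise disjoint $A_k\subseteq\xi_k\Gamma^2$, so that $\norm{v}^2=\sum_k\norm{v_k}^2$ and each $|v_k|^2=v_k\overline{v_k}$ satisfies $\supp\widehat{|v_k|^2}\subseteq(\xi_k\Gamma^2)(\xi_k\Gamma^2)^{-1}=\Gamma^4$ with $\widehat{|v_k|^2}(\mathds{1}_G)=\norm{v_k}^2$. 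The crux is then to show that the modulus of $v$ --- assembled from $\Ncal$ such spectrally clustered pieces --- still carries a $\Gamma^4$-localized energy of at least $\frac{\muGhat{\Gamma^2}^2}{\Ncal\,\muGhat{\Gamma^4}^2}\norm{v}^2$. I would derive this as the LCA-group analog of the energy-in-the-modulus estimates of \cite{mallat2012group,czaja2019analysis}: testing $\widehat{|v|}$ on $\Gamma^4$ against a low-frequency, Fourier-nonnegative function such as $\mathds{1}_{\Gamma^2}*\mathds{1}_{\Gamma^2}$ (so that the $L^2(G)$-side pairing becomes $\int_G|v|\cdot\bigl(\mathcal F^{-1}\mathds{1}_{\Gamma^2}\bigr)^2\geq0$), Cauchy--Schwarz, a pigeonhole step selecting a piece $v_{k_0}$ with $\norm{v_{k_0}}^2\geq\norm{v}^2/\Ncal$, and sumset/covering estimates in the spirit of \cite[Section~2.4]{tao2006additive},\cite{ruzsa1999analog} to control the interaction of the clusters. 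This is the only genuinely nonroutine part, and within it the main obstacles are the control of the cross terms $v_j\overline{v_k}$ ($j\neq k$), whose Fourier supports $\xi_j\xi_k^{-1}\Gamma^4$ are in general neither contained in nor separated from $\Gamma^4$, and pinning down the precise constant $\muGhat{\Gamma^2}^2/(\Ncal\,\muGhat{\Gamma^4}^2)$.

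Finally, \eqref{eq: upper bound on covering number} is the Ruzsa covering lemma: a maximal family $\set{\xi_k}\subseteq\supp\widehat\psi$ whose translates $\xi_k\Gamma$ are pairwise disjoint has at most $\muGhat{\Gamma\cdot\supp\widehat\psi}/\muGhat{\Gamma}$ elements --- they all lie in $\Gamma\cdot\supp\widehat\psi$ --- while by maximality $\supp\widehat\psi\subseteq\bigcup_k\xi_k\Gamma\Gamma^{-1}=\bigcup_k\xi_k\Gamma^2$, so $\Ncal(\Psi,\Gamma^2)$ is at most the displayed floor. Hence $\alpha(\Psi,\Gamma)<1$ as soon as $\sup_{\psi\in\Psi}\muGhat{\Gamma\cdot\supp\widehat\psi}<\infty$, which gives global exponential energy decay; and if $G$ is discrete then $\widehat G$ is compact, so $\muGhat{\Gamma\cdot\supp\widehat\psi}\leq\muGhat{\widehat G}<\infty$ for every $\psi$ and every admissible $\Gamma$, whence $\alpha(\Psi,\Gamma)<1$ and exponential decay holds on all of $L^2(G)$.
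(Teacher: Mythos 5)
Your overall architecture is sound and matches the paper's: reduce to a per-layer lower bound on $C^\layer$ (equivalently $\iota_N$), observe that $|\widehat{\chi}|\equiv 1$ on $\Gamma_\Psi$ so that it suffices to bound $\int_{\Gamma_\Psi}\abs{\Fcal(|v|)(\xi)}^2\dmuGhat(\xi)$ from below for $v=f*\psi$, construct $\Gamma$ by iterating continuity of multiplication, prove \eqref{eq: upper bound on covering number} via Ruzsa's covering lemma, and handle the discrete case through compactness of $\widehat{G}$. All of that is correct. The problem is that the one-step estimate --- which you yourself identify as ``the only genuinely nonroutine part'' --- is not proved. You sketch a strategy (disjointify the cover, split $v=\sum_k v_k$, pass to $|v|^2$, pigeonhole a heavy piece, control cluster interactions by sumset estimates) and then explicitly leave open its two essential difficulties: the cross terms $v_j\overline{v_k}$ with Fourier supports $\xi_j\xi_k^{-1}\Gamma^4$ that are ``neither contained in nor separated from $\Gamma^4$,'' and the derivation of the exact constant $\muGhat{\Gamma^2}^2/\bigl(\Ncal(\Psi,\Gamma^2)\,\muGhat{\Gamma^4}^2\bigr)$. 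Since everything else is routine, this unproved step \emph{is} the theorem, so the proposal does not establish \eqref{eq: upper bound for W_N, LCA scattering}.

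The paper closes exactly this gap by a mechanism that never decomposes $v$ and therefore never produces cross terms. Set $\widehat{\phi}:=(\mathds{1}_{\Gamma^4}*\mathds{1}_{\Gamma^4})/\muGhat{\Gamma^4}$, so that $\phi\geq 0$, $|\widehat{\chi}|\geq\widehat{\phi}$ (because $\supp\widehat{\phi}\subseteq\Gamma^8\subseteq\Gamma_\Psi$ and $0\leq\widehat\phi\leq 1$), and $\widehat{\phi}\geq\muGhat{\Gamma^2}/\muGhat{\Gamma^4}$ on $\Gamma^2$ by symmetry of $\Gamma$. For each element $\xi_k$ of the cover of $\supp\widehat{\psi}$, the modulated filter $\xi_k\cdot\phi$ satisfies $|\xi_k\cdot\phi|=\phi$ pointwise, hence
\begin{align*}
  \norm{|v|*\chi}^2\;\geq\;\norm{|v|*\phi}^2\;=\;\norm{\,|v|*|\xi_k\cdot\phi|\,}^2\;\geq\;\norm{v*(\xi_k\cdot\phi)}^2\;=\;\int_{\widehat{G}}\abs{\widehat{v}(\xi)}^2\,\abs{\widehat{\phi}(\xi_k^{-1}\xi)}^2\dmuGhat(\xi),
\end{align*}
where the second inequality is the pointwise bound $|u|*|w|\geq|u*w|$. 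Restricting the $k$-th integral to $\xi_k\Gamma^2$, where $\abs{\widehat{\phi}(\xi_k^{-1}\cdot)}\geq\muGhat{\Gamma^2}/\muGhat{\Gamma^4}$, and summing over $k=1,\ldots,\Ncal(\Psi,\Gamma^2)$ so that the cover exhausts $\supp\widehat{v}\subseteq\supp\widehat{\psi}$ gives $\Ncal(\Psi,\Gamma^2)\cdot\norm{|v|*\chi}^2\geq(\muGhat{\Gamma^2}/\muGhat{\Gamma^4})^2\norm{v}^2$ directly --- the modulus is insensitive to the modulation, which is what replaces your pigeonhole-and-cross-term analysis. Unless you substitute an argument of this kind for your sketch, the central estimate remains unestablished.
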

  \begin{proof}
    The proof is given in Appendix \ref{sec: app - thm: rate for general LCA group scattering}.
  \end{proof}
  \begin{remark}
    Theorem \ref{thm: rate for finite LCA group scattering}, Remark \ref{rm: exp decay for nonabelian finite group scattering}, and Theorem \ref{thm: rate for general LCA group scattering} corroborate the experimental reports (e.g., \cite{bruna2013invariant,anden2011multiscale}) that the energy decays exponentially with increasing network depth in digital implementations of scattering CNNs, which is even reinforced by their measure-theoretic analog, Corollary \ref{cor: generic rate for discrete measure spaces}. 
    However, the fact that the digital implementation of any feature extractor necessarily deals with discrete finite signals seems to play a significant role for its energy distribution. In fact, the recent paper \cite{fuhr2025energy} shows that the energy decay can even be arbitrarily slow in certain scattering CNNs over the Euclidean domain $\R^d$. Note that this is no contradiction to the findings in this paper, even when considering the digitial implementations of scattering CNNs as suitable discretizations of their Euclidean-domain counterparts: The basis of the exponential decay guaranteed by, e.g., Theorem \ref{thm: rate for finite LCA group scattering}, converges to $1$ if $\sup_{\psi\in\Psi}\#\supp(\widehat{\psi})$ increases at the order of $\#G$ as the size of the group tends to infinity.
  \end{remark}
  \begin{remark}
    Theorem \ref{thm: rate for general LCA group scattering} generalizes \cite[Proposition 3.3]{czaja2019analysis} (arbitrary LCA groups, explicit bound for the decay basis). %
  \end{remark}

  Our results contribute to the understanding of the energy distribution in deep feature extractors and, as a by-product, partially, positively answer the question of the applicability of a stability result for scattering CNNs posed in \cite[Remark 7]{chew2024geometric}.

\section*{Acknowledgment}

  The author would like to extend his heartfelt thanks to Hartmut Führ for insightful discussions and careful reading of the manuscript.

  \bibliographystyle{IEEEtran} 
  \bibliography{IEEEabrv,References}
  \cleardoublepage
\section{Appendix}\label{sec: app}

\subsection{Proof of Theorem \ref{thm: energy decay in the general setting}}\label{sec: app - thm: energy decay in the general setting}

The statement is trivial if $E^\layer=\emptyset$ or $\Rcal^\layer=\{0\}\subseteq \Hcal^\layer$. %, recalling that these cases are to be understood as $C^\layer=0$. %
For the other case, choose any $0\neq g \in \Rcal^\layer$ and $(\lambda,\eta)\in E^\layer$. By \eqref{eq: introduction of iota_N}, it just remains to prove that $\iota_\ell\geq C^\layer$, $\ell \in \N$.
Since $\eta$ is normalized, there exists an orthonormal basis $\set{\eta}\dot{\cup} \Upsilon$ of $\Hcal^\layer\supseteq \Rcal^\layer$, which allows us to write 
\begin{align*}
	A^\layer g = \ip{g,\eta}A^\layer \eta + \sum_{\upsilon \in \Upsilon} \ip{g,\upsilon} A^\layer \upsilon,
\end{align*}
where the series converges unconditionally in $\Hcal^\layer$. Since $(\lambda,\eta)\in E^\layer$, we find that%
\begin{align}\label{eq: A^ell varphi and A^ell phi orthogonal}
	\ip{A^\layer\eta,A^\layer\upsilon}=\ip{{A^\layer}^\ast A^\layer\eta,\upsilon}=\ip{\lambda \eta,\upsilon}=0.
\end{align} 
Replacing $\upsilon$ by $\eta$ in \eqref{eq: A^ell varphi and A^ell phi orthogonal} also shows that $\norm{A^\layer \eta}^2=\lambda$.
Consequently, the Pythagorean theorem gives
\begin{align}\label{eq: lower bound for norm of A sigma g}
	\norm{A^\layer g}^2 &= \norm{\ip{g,\eta}A^\layer \eta}^2 
	+ \norm{\sum_{\upsilon \in \Upsilon} \ip{g,\upsilon} A^\layer \upsilon}^2 \nonumber\\
	&\geq \left|\ip{g,\eta}\right|^2 \cdot \lambda. %
\end{align} 
Taking the supremum over all pairs $(\lambda,\eta)\in E^\layer$ in \eqref{eq: lower bound for norm of A sigma g}, normalizing and inserting this into \eqref{def: iota_N} concludes the proof.

\subsection{Proof of Corollary \ref{cor: generic rate for discrete measure spaces}}\label{sec: app - cor: generic rate for discrete measure spaces}
Any $0\neq g\in\Rcal^\layer\subseteq \Rcal(\sigma^\layer)$ satisfies $g\geq 0$ ($\mu^\layer$-a.e.) by assumption on $\sigma^\layer$. Thus,
\begin{align}
	\left|\ip{g/\norm{g},\eta}\right|^2 \cdot \lambda %
	&= \left(\int \frac{g}{\norm{g}} \cdot \lambda^{1/2}\eta \dmul\right)^2 \nonumber\\
	&\geq C_{A^\layer} \cdot \norm{g}_{L^1(\Mcal^\layer)}^2/\norm{g}_{L^2(\Mcal^\layer)}^2 \nonumber\\
	&\geq C_{A^\layer} \cdot C_{\Mcal^\layer} , \label{eq: L1-L2 embedding}
\end{align}
where \eqref{eq: L1-L2 embedding} is due to the embedding $L^1(\Mcal^\layer) \hookrightarrow L^2(\Mcal^\layer)$, which itself is a well-known consequence from the assumption that $C_{\Mcal^\layer}>0$. Comparing with \eqref{def: constant C^ell}, we conclude
\begin{align*}
	C^\layer&\geq C_{A^\layer} \cdot C_{\Mcal^\layer}.
\end{align*}
The upper bound for $W_N(f)$ %
now follows from Theorem \ref{thm: energy decay in the general setting}.

\subsection{Proof of Theorem \ref{thm: rate for finite LCA group scattering}}\label{sec: app - thm: rate for finite LCA group scattering}
Any $g\in \Rcal^{(N)}$ can be written as $g=|f*\psi|$ for some $f\in L^2(G)$, $\psi \in \Psi$. Thus, 
\begin{align*}
	\iota_N\geq \inf_{f\in L^2(G), \psi \in \Psi:\norm{f*\psi}=1} \norm{|f*\psi|*\chi}^2.
\end{align*}
By \eqref{eq: introduction of iota_N}, it suffices to show that, for all $f\in L^2(G)$, $0\neq\psi \in \Psi$,
\begin{align}\label{eq: iota_N vs Gamma, finite group}
	\norm{|f*\psi|*\chi}^2 \geq (\#\supp(\widehat{\psi}))^{-1} \cdot \norm{f*\psi}^2.
\end{align}
The Littlewood-Paley condition \eqref{eq: Littlewood-Paley condition} entails that $|\widehat{\chi}|\equiv 1$ holds on $\Gamma_\Psi$. In particular, $|\widehat{\chi}|\geq \delta_{\mathds{1}_G}$ holds $\mu_{\widehat{G}}$-a.e., and hence, by Parseval's theorem and the convolution theorem, 
\begin{align}\label{eq: analog to Mallat's lemma/Czaja,Li for finite groups}
	\norm{|f*\psi|*\chi}^2=\norm{\Fcal(|f*\psi|)\cdot \widehat{\chi}}^2
	\geq \norm{\Fcal(|f*\psi|)\cdot \delta_{\mathds{1}_G}}^2. 
\end{align}
The RHS of \eqref{eq: analog to Mallat's lemma/Czaja,Li for finite groups} equals 
\begin{align*}
	\mu_{\widehat{G}}(\{\mathds{1}_G\})\cdot |\Fcal(|f*\psi|)(\mathds{1}_G)|^2= \mu_{\widehat{G}}(\{\mathds{1}_G\})\cdot\norm{f*\psi}_1^2,
\end{align*}
and we have $\Fcal(f*\psi)\equiv 0$ on $\widehat{G}\setminus \supp(\widehat{\psi})$, which entails that
\begin{align*}
	\norm{f*\psi}_1^2 \geq \norm{\Fcal(f*\psi)}_\infty^2 \geq \frac{\norm{\Fcal(f*\psi)}^2}{\mu_{\widehat{G}}(\{\mathds{1}_G\})\cdot \#\supp(\widehat{\psi})}.
\end{align*}
Finally, applying Parseval's theorem again, %
and putting all together, we conclude \eqref{eq: iota_N vs Gamma, finite group}; hence, the proof of the theorem.

\subsection{Proof of Theorem \ref{thm: rate for general LCA group scattering}}\label{sec: app - thm: rate for general LCA group scattering}
   
Similar to the Proof of Theorem \ref{thm: rate for finite LCA group scattering},
it suffices to show that, for all $f\in L^2(G)$, $\psi \in \Psi$,
\begin{align}\label{eq: iota_N vs Gamma}
	\norm{|f*\psi|*\chi}^2 \geq \frac{\muGhat{\Gamma^2}^2}{\Ncal(\Psi,\Gamma^2)\cdot \muGhat{\Gamma^4}^2} \cdot \norm{f*\psi}^2.
\end{align}

Let $\widehat{\phi}:=(\mathds{1}_{\Gamma^4}*\mathds{1}_{\Gamma^4})/\muGhat{\Gamma^4}$. This defines an auxiliary function %
$\phi\in L^2(G)$, which satisfies both $\phi\geq 0$ and $|\widehat{\chi}|\geq \widehat{\phi}$. By the symmetry of $\Gamma$, we further have %
\begin{align}\label{eq: pointwise lower bound for phi}
	\widehat{\phi}\big|_{\Gamma^2}\geq \frac{\muGhat{\Gamma^2}}{\muGhat{\Gamma^4}}.
\end{align}
Let $n:=\Ncal(\Psi,\Gamma^2)$. There are $\xi_1,\ldots,\xi_n\in \widehat{G}$ such that $\supp(\widehat{\psi})\subseteq \bigcup_{k=1}^{n} \xi_k \Gamma^2$. By Parseval's theorem and the convolution theorem, 
\begin{align}\label{eq: analog to Mallat's lemma/Czaja,Li , eq 1}
	\norm{|f*\psi|*\chi}^2=\norm{\Fcal(|f*\psi|)\cdot \widehat{\chi}}^2 
	\geq \norm{\Fcal(|f*\psi|)\cdot \widehat{\phi}}^2.
\end{align}
Employing \eqref{eq: pointwise lower bound for phi} and the nonnegativity of $\phi$, we obtain 
\begin{align}\label{eq: analog to Mallat's lemma/Czaja,Li , eq 2}
	\eqref{eq: analog to Mallat's lemma/Czaja,Li , eq 1}~ &=\norm{|f*\psi|*|\xi_k \cdot \phi|}^2 \nonumber\\
	&\geq \norm{f*\psi*(\xi_k \cdot \phi)}^2 \nonumber\\
	&\geq \int_{\xi_k \Gamma^2} |\widehat{f}(\xi)|^2 \cdot |\widehat{\psi}(\xi)|^2 \cdot |\widehat{\phi}(\xi_k^{-1}\cdot \xi)|^2 \dmuGhat(\xi) \nonumber\\
	&\geq \left(\frac{\muGhat{\Gamma^2}}{\muGhat{\Gamma^4}}\right)^2 \int_{\xi_k \Gamma^2} |\widehat{f}(\xi)|^2 \cdot |\widehat{\psi}(\xi)|^2 \dmuGhat(\xi).
\end{align}
Summing over $k$, and combining \eqref{eq: analog to Mallat's lemma/Czaja,Li , eq 1} and \eqref{eq: analog to Mallat's lemma/Czaja,Li , eq 2} yields
\begin{align*}
	&n \cdot \norm{|f*\psi|*\chi}^2 \\
	&\geq \left(\frac{\muGhat{\Gamma^2}}{\muGhat{\Gamma^4}}\right)^2 \int_{\bigcup_{k=1}^{n} \xi_k \Gamma^2} |\widehat{f}(\xi)|^2 \cdot |\widehat{\psi}(\xi)|^2 \dmuGhat(\xi) \\
	&\geq \left(\frac{\muGhat{\Gamma^2}}{\muGhat{\Gamma^4}}\right)^2 \int_{\supp(\widehat{\psi})} |\widehat{f}(\xi)|^2 \cdot |\widehat{\psi}(\xi)|^2 \dmuGhat(\xi). %
\end{align*}
Since the last integral equals $\norm{f*\psi}^2$, rearranging terms concludes the proof of \eqref{eq: iota_N vs Gamma}.

Finally, \eqref{eq: upper bound on covering number} follows from a straightforward generalization of Ruzsa's covering lemma for sumsets (cf. \cite{ruzsa1999analog} and \cite[Lemma 2.14]{tao2006additive}) to LCA groups. %

The existence of $\Gamma$ and the conclusion for $G$ discrete (i.e., $\widehat{G}$ compact) follow from the standard theory for LCA groups.

  \end{document}